\newcommand{\R}{{\mathbb{R}}}
\renewcommand{\top}{\mathsf{T}}
\title{Learning the Simplest Neural ODE}
\author{Yuji Okamoto*${}^{1\dagger}$, Tomoya Takeuchi*${}^{2}$, and Yusuke Sakemi*${}^{2}$}
\abstract{%
Since the advent of the ``Neural Ordinary Differential Equation (Neural ODE)'' paper\cite{chen2018neural}, learning ODEs with deep learning has been applied to system identification, time-series forecasting, and related areas. 
Exploiting the diffeomorphic nature of ODE solution maps, neural ODEs have also enabled their use in generative modeling. 
Despite the rich potential to incorporate various kinds of physical information, training Neural ODEs remains challenging in practice. 
This study demonstrates, through the simplest one-dimensional linear model, why training Neural ODEs is difficult.
We then propose a new stabilization method and provide an analytical convergence analysis. 
The insights and techniques presented here serve as a concise tutorial for researchers beginning work on Neural ODEs.
}
\keywords{%
Neural ODE, Gradient Method, Non-Convex Optimization
}
\begin{document}

\maketitle
\footnote[0]{* Equal contributions.}

\section{Introduction}
The learning of ordinary differential equations (ODEs)
\begin{align}
\hspace{2em}
\dot{x} = f_\theta (x),\quad x(0) = x_0, \label{Eq:ODE}
\end{align}
where the dynamics \(f_\theta: \R^n \rightarrow \R^n\) are modeled by a neural network (NN), is widely used for modeling and predicting a variety of physical systems such as robotics and electronic circuits\cite{zhao2024neural,xiong2022neural}.
Because ODE models allow one to embed inherent properties---for example, Hamiltonian structure \cite{greydanus2019hamiltonian} or stability guarantees \cite{Manek2019,kojima2022learning,okamoto2025learning}---Neural ODEs often achieve accurate long-term forecasting.
Moreover, treating the solution map of an ODE as a diffeomorphism has led to applications in generative modeling. Compared to normalizing flows \cite{rezende2015variational}, such continuous-time models can yield low-parameter, memory-efficient generators \cite{finlay2020train}.

Gradient-based optimization is the de-facto standard for learning NN-defined dynamics, enabled by the adjoint method \cite{chen2018neural} for efficient gradient computation. Nonetheless, several issues arise:
\begin{enumerate}[(i)]
    \item Sensitivity of gradients to the sampling interval of time-series data,
    \item Variation of convergence speed depending on NN initialization,
    \item Difficulty of tuning hyper-parameters such as leaning rate.
\end{enumerate}
These factors often cause training instability or gradient vanishing/explosion. Existing work addresses them heuristically by restricting the search space of dynamics \cite{finlay2020train} or by tailored initialization \cite{westny2024stability}. However, the fundamental question---\textbf{\textit{ why is training Neural ODE hard?}}---remains unanswered.

To this end, we clarify the difficulty using the simplest ODE and propose a new gradient rule that mitigates it. Our contributions are:
\begin{enumerate}[1]
    \item We reveal, via a one-dimensional linear system, why standard gradient descent struggles.
    \item We introduce a new learning rule leveraging the variance of terminal states.
    \item We demonstrate its effectiveness on a toy example.
\end{enumerate}

\begin{algorithm*}[t]
\caption{Gradient via Adjoint Method}
\begin{algorithmic}
\Require Dynamics $f_{\theta}$, time $t$, initial state $x_0$, terminal state $x_t$, true state $x_t^* = \phi_{f^*}^t(x_0)$
\Ensure $ \tfrac{\partial {\mathcal{L}}}{\partial \theta }  \triangleq \tfrac{\partial}{\partial \theta } \Big(\tfrac{1}{2}\|\phi_{f_\theta }^t(x_0)  - x_t^*\|^2 \Big)$
\vspace{2pt}
\State $(x_t,y_t,z_t) \gets \left(x_t,\tfrac{\partial{\mathcal{L}}}{\partial x_t}, {\bf 0} \right)$ \Comment{Initialize augmented state} \Statex
\State $(x_0,y_0,z_0) \gets \texttt{ODESolver} \bigg( \Big(-f_\theta,-\tfrac{\partial f_{\theta}}{\partial x}^{\top} y, - y^{\top} \tfrac{\partial f_{\theta}}{\partial \theta} \Big),(x_t,y_t,z_t),t\bigg)$ 
\Comment{Solve Reversal-time ODE}\\
 \hfill\text{/* $\texttt{ODESolver}(\text{dynamics},\text{initial stete},\text{time})$ \textit{is an numerical ODE integrator */}}\Statex
\Return $z_{0}$ \Comment{Return gradient}
\end{algorithmic}
\label{Alg:adjoint_method}
\end{algorithm*}

\section{Related Work}
Prior studies have shown that constraints on the dynamics, on the solution map, and on parameter initialization are all crucial for training Neural ODEs.

\textbf{Dynamics-constrained Neural ODEs:}
Manek and Kolter \cite{Manek2019} proposed constraining the learned dynamics to be strictly Lyapunov-stable with respect to a learned Lyapunov function, thereby improving accuracy for complex stable systems. Stability constraints have been extended to input–output systems, leading to input–output stable \cite{kojima2022learning} and dissipative Neural ODEs \cite{okamoto2025learning}, both of which exhibited performance gains.

\textbf{Solution-map-constrained Neural ODEs:}
Treating the ODE solution map as a continuous normalizing flow, FFJORD \cite{grathwohl2018scalable} introduced regularization for probabilistic change-of-variables, enabling continuous conditional density estimation. Subsequent work improved learning rules from an optimal-transport viewpoint and tackled computational costs \cite{finlay2020train}.

\textbf{Initialization techniques:}
Westny \emph{et al.} \cite{westny2024stability} proposed initializing parameters inside the stability region of the chosen ODE solver, observing that standard initializations sometimes yield eigenvalues outside this region and degrade performance, thereby improving both convergence and accuracy.

\section{Background}

Training a Neural ODE is typically formulated as minimizing the expected terminal error. Let \(f_\theta: \R^n \rightarrow \R^n\) denote the learned dynamics and \(f_*:\R^n\rightarrow \R^n\) the true dynamics generating the data. Denote the ODE solution at time \(t\) by \(\phi_f^t(x)\). The terminal loss at time \(t\) is
\begin{align}
\hspace{2em}    {\mathrm{Loss}}(\theta) \triangleq  {\mathrm{E}}_{x\sim p} \Bigg[\frac{1}{2}\big\|\phi_{f_\theta}^t(x) - \phi_{f_*}^t(x)\big\|^2\Bigg],\label{Eq:Loss_func}
\end{align}
where \(p\) is the distribution of initial state.

Parameters are updated via gradient descent:
\begin{align}
\hspace{2em}    \theta_{k+1} =  \theta_k  - \eta \nabla {\mathrm{Loss}}(\theta_k),\label{Eq:gradient_method}
\end{align}
with learning rate \(\eta>0\).
In Neural ODEs, the gradient of $\mathrm{Loss}$ is efficiently computed via the adjoint method, enabling the learning of complex nonlinear dynamics represented by neural networks. {\bf Algorithm~\ref{Alg:adjoint_method}} presents the corresponding pseudocode.

Although gradient descent enjoys linear convergence on smooth, strongly convex functions, such assumptions rarely hold in deep learning, where the parameter landscape is high-dimensional and highly non-convex. Consequently, practitioners rely on optimizers and hyper-parameter tuning without analyzing loss properties.

Importantly, \textbf{\textit{ the difficulty in training Neural ODEs does not stem from the complexity of \(f_\theta\) itself}}. In the next section, we use the simplest one-dimensional linear dynamics to pinpoint the cause of optimization difficulty.

\section{Problem}

\begin{figure}[t]
    \centering
    \begin{tabular}{c}
    \begin{subfigure}[b]{0.38\textwidth}
        \includegraphics[width=\textwidth]{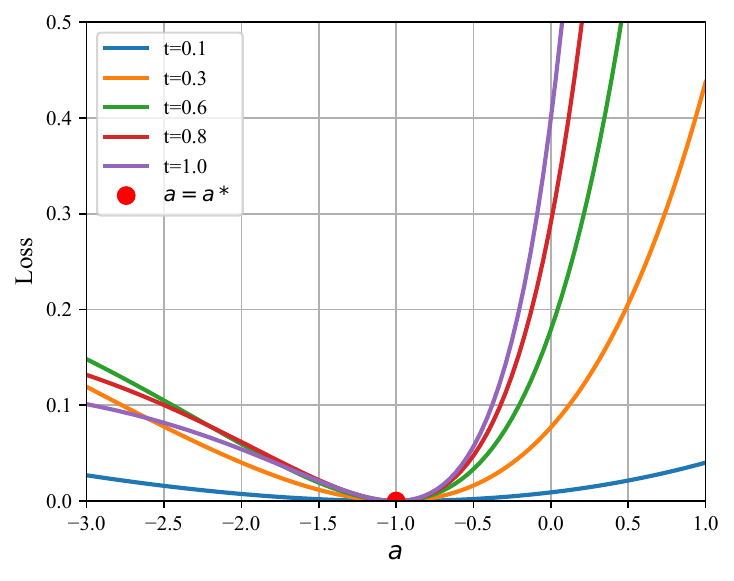}
    \end{subfigure}\\
        \text{(a) ${\mathrm{Loss}}(a)$ for varying $t$ ($a^*=-1$)}\\
    \vspace{0.1cm}\\
    \begin{subfigure}[b]{0.38\textwidth}
        \includegraphics[width=\textwidth]{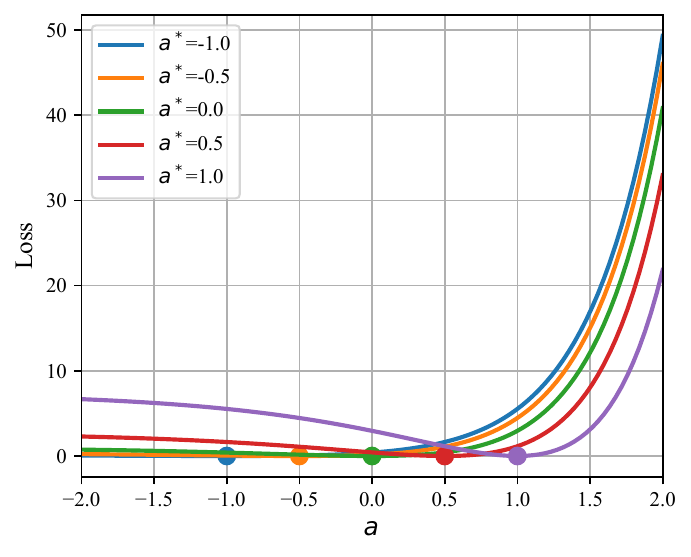}
    \end{subfigure}\\
         \text{(b) ${\mathrm{Loss}}(a)$ for varying $a^*$ ($t=1$)}
   \end{tabular}
    \caption{Visualization of ${\mathrm{Loss}}(a)$ under different settings. 
    (a) shows the effect of time $t$ with fixed $a^*=-1$, while 
    (b) illustrates the influence of different target values $a^*$ with fixed $t=1$. Markers indicate the minima where $a = a^*$.}
    \label{fig:loss_functions}
\end{figure}

\begin{algorithm*}[t]
\caption{New Update role of Neural ODE}
\begin{algorithmic}
\Require Initial parameter $\theta_0$, time $t$, dataset ${\mathcal{D}}\triangleq \{x_0^{(i)},x_t^{*(i)}= \phi_{f_*}^t(x_0^{(i)})\}_{i=1}^N$
\Ensure Optimized parameter $\theta$
\vspace{1pt}
\State $\theta \gets \theta_0$
\While{\textbf{not} converged}
    \State ${\mathcal{G}},{\mathcal{X}}_t \gets \{\},\{\}$ \Comment{Initialize gradient and terminal state set}
    \ForAll{ $(x_0,x_t^*)\in\mathcal{D}$}
        \State $x_t  \gets \texttt{ODESolve} \big( f_{\theta},x_0,t\big)$\Comment{Estimate terminal state}
        \State $g \gets$ {\bf Algorithm~\ref{Alg:adjoint_method}} with $(f_\theta,t,x_0, x_t,x_t^*)$\Comment{Calculate Gradient via adjoint method}
        \State ${\mathcal{X}}_t.\text{append}(x_t)\quad {\mathcal{G}}.\text{append}(g)$ \Comment{Save each terminal state and gradient}
    \EndFor
    \State $\sigma^2_t\gets \tfrac{1}{N}\sum_{x_t \in {\mathcal{X}}_t}x_t^2$ \Comment{Calculate sample terminal variance}
    \State $\nabla{\rm Loss}\gets \tfrac{1}{N}\sum_{g \in {\mathcal{G}}}g$ \Comment{Calculate sample mean gradient}
    \State $\theta\gets \theta - \eta \tfrac{1}{t^2\sigma^2_t}\nabla{\rm Loss}$ \Comment{Update parameters}
\EndWhile
\end{algorithmic}
\label{Alg:update_role}
\end{algorithm*}

For simplicity, we analyze one-dimensional linear dynamics. Let the learned and true dynamics be
\begin{align}
\hspace{2em}     f_\theta(x) \triangleq  a x,  \quad f_*(x) \triangleq a^* x\label{Eq:1dim_dynamics}
\end{align}
and assume the initial distribution \(p\) is Gaussian with mean \(0\) and variance \(\sigma^2\).
Then the loss \eqref{Eq:Loss_func} becomes
\begin{align}
\hspace{2em}    {\mathrm{Loss}}(a) =  \frac{\sigma^2}{2}(e^{a t} - e^{a^*t})^2. \label{Eq:Loss_1dim}
\end{align}

This loss is neither strongly convex on \(\R\) nor even convex: it is concave for \(a < a^* - \tfrac{\ln 2}{t}\).
Figure~\ref{fig:loss_functions}(a) shows that increasing the terminal time \(t\) accentuates asymmetry, while Figure~\ref{fig:loss_functions}(b) confirms the asymmetry regardless of \(a^*\).

\textbf{\textit{Asymmetry of the terminal loss function explains the difficulty of the gradient method:}} selecting \(\eta\) suitable for \(a>a^*\) causes gradient vanishing, whereas tuning for \(a<a^*\) leads to gradient explosion. Both initialization and learning rate must therefore be carefully co-tuned, which is highly sensitive, as the scale in Fig.~\ref{fig:loss_functions}(b) reveals.

We next quantify the dependence of convergence speed on these factors.

\begin{lemma}\label{lem:loss_step}
If the learning rate \(\eta\) is sufficiently small, then
\begin{align*}
\hspace{2em}    {\mathrm{Loss}}(a_{k+1}) &=  \bigl(1- 2\eta t^2 \sigma_k^2(t)\bigr) {\mathrm{Loss}}(a_{k}),
\end{align*}
where the terminal-state variance is
\begin{align*}
\hspace{2em}    \sigma_k^2(t) \triangleq {\mathrm{E}}\Bigl[ \bigl(\phi_{f_{\theta_k}}^t(x)\bigr)^2\Bigr] = \sigma^2e^{2a_kt}.
\end{align*}
\end{lemma}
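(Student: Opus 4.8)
The plan is to combine a first-order Taylor expansion of the loss in the learning rate $\eta$ with an exact algebraic identity relating the squared gradient to the product of the terminal-state variance and the loss. Since the model is one-dimensional, the whole argument reduces to elementary scalar calculus, and I expect no deep obstacle—only care in making the small-$\eta$ approximation precise.

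First I would write the gradient-descent step \eqref{Eq:gradient_method} explicitly in this scalar setting as $a_{k+1} = a_k - \eta\,\mathrm{Loss}'(a_k)$, and compute the derivative of \eqref{Eq:Loss_1dim} by the chain rule:
\begin{align*}
\mathrm{Loss}'(a) = \sigma^2 t\, e^{at}\bigl(e^{at} - e^{a^*t}\bigr).
\end{align*}
Next I would Taylor-expand the loss at $a_{k+1}$ about $a_k$. Using $a_{k+1} - a_k = -\eta\,\mathrm{Loss}'(a_k)$ and retaining only the leading order in $\eta$,
\begin{align*}
\mathrm{Loss}(a_{k+1}) = \mathrm{Loss}(a_k) - \eta\bigl(\mathrm{Loss}'(a_k)\bigr)^2 + O(\eta^2).
\end{align*}

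It then remains to establish the key identity $\bigl(\mathrm{Loss}'(a_k)\bigr)^2 = 2t^2\sigma_k^2(t)\,\mathrm{Loss}(a_k)$. Squaring the gradient gives $\sigma^4 t^2 e^{2a_k t}\bigl(e^{a_k t} - e^{a^*t}\bigr)^2$, and I would verify that the right-hand side produces the same expression after substituting $\sigma_k^2(t) = \sigma^2 e^{2a_k t}$ together with \eqref{Eq:Loss_1dim}. The variance formula itself follows at once from the linear flow $\phi_{f_{\theta_k}}^t(x) = x\,e^{a_k t}$, whence $\mathrm{E}\bigl[(x e^{a_k t})^2\bigr] = e^{2a_k t}\,\mathrm{E}[x^2] = \sigma^2 e^{2a_k t}$ for $x\sim\mathcal{N}(0,\sigma^2)$. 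Substituting the identity into the first-order expansion yields the claimed recursion.

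The one genuinely subtle point—more of an interpretive than a technical obstacle—is the role of the hypothesis. The stated equality is written with an exact ``$=$'', yet the Taylor argument only delivers it up to an $O(\eta^2)$ remainder; the ``sufficiently small $\eta$'' assumption is precisely what lets me discard that remainder, and I would state clearly that the lemma is understood to leading order in $\eta$. Every remaining step is routine differentiation and algebra, so I would keep the write-up terse and concentrate the exposition on the matching of the squared-gradient identity.
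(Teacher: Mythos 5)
Your proposal is correct and follows essentially the same route as the paper: a first-order Taylor expansion $\mathrm{Loss}(a_{k+1}) = \mathrm{Loss}(a_k) - \eta\bigl(\mathrm{Loss}'(a_k)\bigr)^2 + \mathcal{O}(\eta^2)$ combined with the identity $\bigl(\mathrm{Loss}'(a_k)\bigr)^2 = 2t^2\sigma_k^2(t)\,\mathrm{Loss}(a_k)$. Your remark that the stated equality really holds only to leading order in $\eta$, with the \emph{sufficiently small} hypothesis licensing the dropped $\mathcal{O}(\eta^2)$ remainder, is a fair and accurate reading of what the paper's own proof tacitly assumes.
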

\begin{proof}
Let \(\Delta a = -\eta {\mathrm{Loss}}^\prime(a_k)\). Then
\begin{align*}
\hspace{2em}    &{\mathrm{Loss}}(a_{k+1}) \\
    &={\mathrm{Loss}}(a_{k}+\Delta a)\\
    &={\mathrm{Loss}}(a_{k})+{\mathrm{Loss}}^\prime(a_k)(\Delta a)+{\mathcal{O}}(\eta^2).
\end{align*}
Because
\begin{align*}
\hspace{2em}    {\mathrm{Loss}}^\prime(a_k)(\Delta a)&=-\eta |{\mathrm{Loss}}^\prime(a_k)|^2
   \\
   &=-\eta |t e^{a_kt} (e^{a_kt} - e^{a^*t})\sigma^2|^2\\
 &=-2\eta t^2 \sigma^2 e^{2a_kt}  {\mathrm{Loss}}(a_{k}),
\end{align*}
the claim follows.
\end{proof}

Hence, convergence speed depends on the evolving terminal variance \(\sigma_k^2(t)\), which changes with initialization and during training, making optimization difficult. Yet, this variance is observable during training, suggesting a way to incorporate it into the update rule. The next section develops such a method.

\section{Method}

\begin{figure*}[t]
    \centering
        \begin{tabular}{ccc}
    \begin{subfigure}[b]{0.295\textwidth}
        \includegraphics[width=\textwidth]{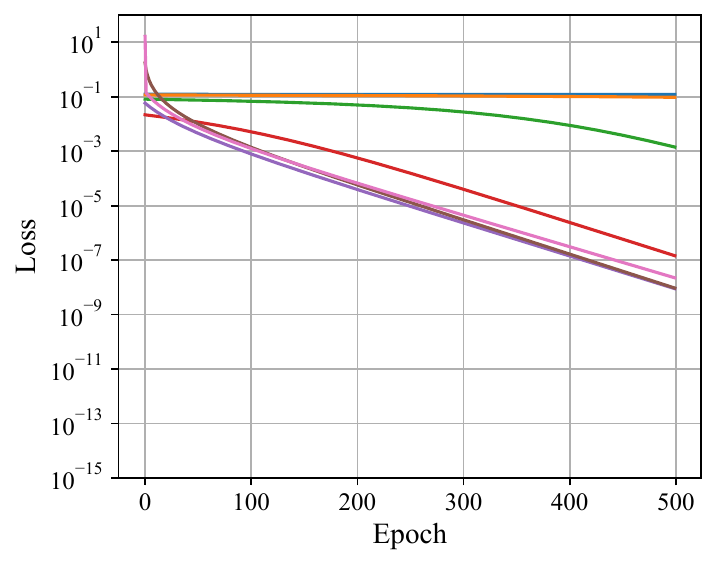}
    \end{subfigure}&
    \begin{subfigure}[b]{0.295\textwidth}
        \includegraphics[width=\textwidth]{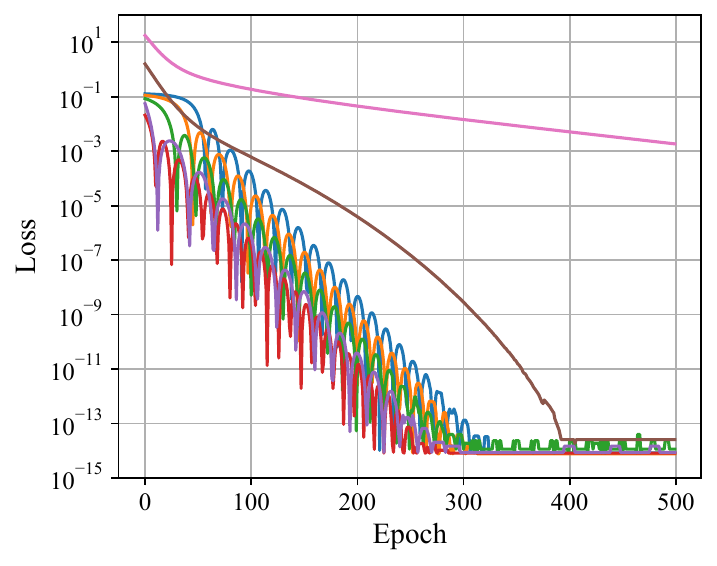}
    \end{subfigure}&
    \begin{subfigure}[b]{0.295\textwidth}
        \includegraphics[width=\textwidth]{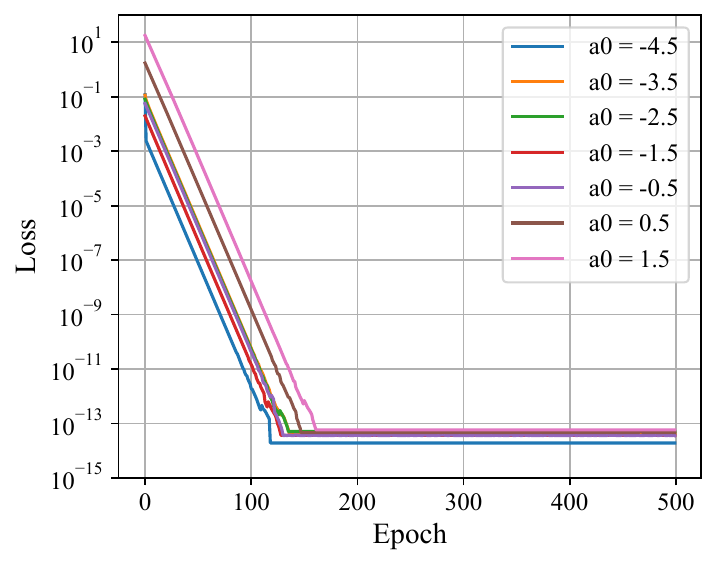}
    \end{subfigure}\\
    \begin{subfigure}[b]{0.295\textwidth}
        \includegraphics[width=\textwidth]{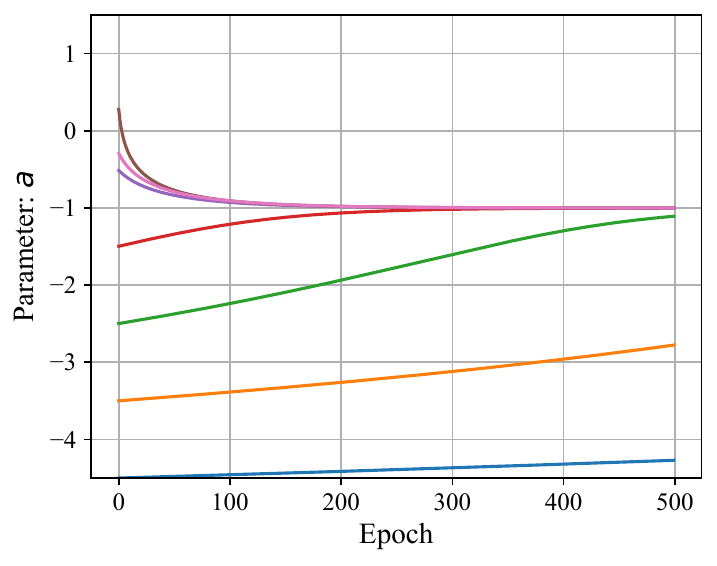}
        \caption{SGD}
    \end{subfigure}&    
    \begin{subfigure}[b]{0.295\textwidth}
        \includegraphics[width=\textwidth]{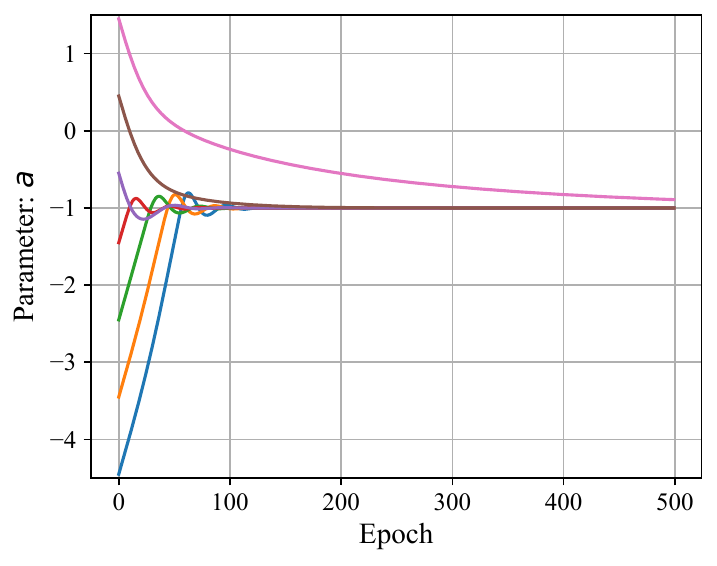}
        \caption{Adam}
    \end{subfigure}&
    \begin{subfigure}[b]{0.295\textwidth}
        \includegraphics[width=\textwidth]{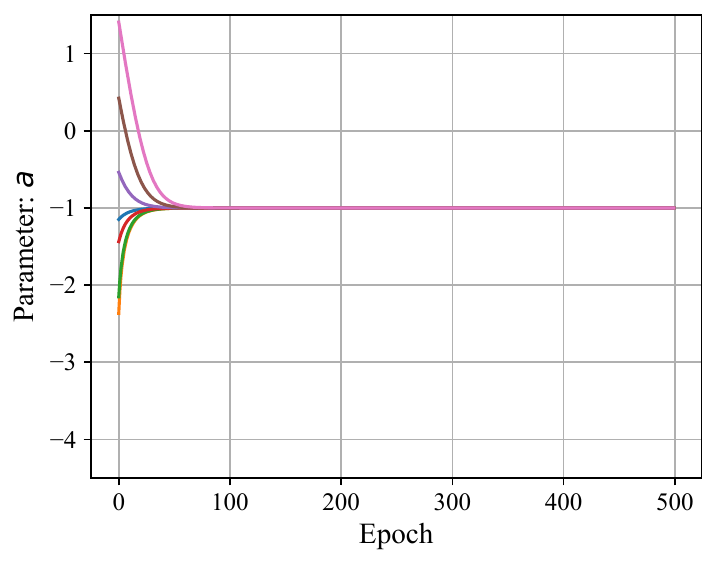}
        \caption{Natural Gradient (Theorem~1)}
    \end{subfigure}
    \end{tabular}
    \caption{Result of each gradient method. The color of the lines represents the difference in the initial values.}
    \label{fig:result}
\end{figure*}

We propose a new gradient update that achieves linear convergence for the simplest ODE.

\begin{theorem}\label{thm:main}
For the dynamics in \eqref{Eq:1dim_dynamics}, define the variance-corrected update
\begin{align*}
\hspace{2em}    a_{k+1} = a_k - \eta \frac{1}{t^2\sigma^2_k(t)} \nabla {\mathrm{Loss}}(a_k).
\end{align*}
If the learning rate \(\eta>0\) is sufficiently small, then
\begin{align*}
\hspace{2em}    {\mathrm{Loss}}(a_{k}) = (1-2\eta)^k{\mathrm{Loss}}(a_0).\\
\end{align*}
\end{theorem}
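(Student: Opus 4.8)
The plan is to mirror the single-step argument of Lemma~\ref{lem:loss_step}, since the proposed rule is nothing but the plain gradient step of \eqref{Eq:gradient_method} with the per-iteration learning rate rescaled to $\tilde\eta_k \triangleq \eta/\bigl(t^2\sigma_k^2(t)\bigr)$. First I would set $\Delta a = -\eta\,\tfrac{1}{t^2\sigma_k^2(t)}{\mathrm{Loss}}^\prime(a_k)$ and Taylor-expand
\begin{align*}
\hspace{2em}    {\mathrm{Loss}}(a_{k+1}) = {\mathrm{Loss}}(a_k) + {\mathrm{Loss}}^\prime(a_k)(\Delta a) + {\mathcal{O}}(\eta^2),
\end{align*}
exactly as in the lemma, which is legitimate because the preconditioner $1/\bigl(t^2\sigma_k^2(t)\bigr)$ is finite and positive (here $\sigma_k^2(t) = \sigma^2 e^{2a_kt} > 0$).

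The key step is the cancellation. Reusing the identity $|{\mathrm{Loss}}^\prime(a_k)|^2 = 2t^2\sigma^2 e^{2a_kt}{\mathrm{Loss}}(a_k) = 2t^2\sigma_k^2(t){\mathrm{Loss}}(a_k)$ already established inside the proof of Lemma~\ref{lem:loss_step}, the first-order term reduces to
\begin{align*}
\hspace{2em}    {\mathrm{Loss}}^\prime(a_k)(\Delta a) &= -\eta\,\frac{1}{t^2\sigma_k^2(t)}\,|{\mathrm{Loss}}^\prime(a_k)|^2\\
    &= -2\eta\,{\mathrm{Loss}}(a_k),
\end{align*}
so that dividing by $t^2\sigma_k^2(t)$ erases precisely the variance factor that made the contraction rate in Lemma~\ref{lem:loss_step} both initialization- and iteration-dependent. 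Hence, to leading order, ${\mathrm{Loss}}(a_{k+1}) = (1-2\eta){\mathrm{Loss}}(a_k)$ with a \emph{constant} rate, and iterating this recursion $k$ times gives the claimed geometric decay ${\mathrm{Loss}}(a_k) = (1-2\eta)^k{\mathrm{Loss}}(a_0)$. Equivalently, one may simply invoke Lemma~\ref{lem:loss_step} with $\eta \mapsto \tilde\eta_k$ and read off $1 - 2\tilde\eta_k t^2\sigma_k^2(t) = 1 - 2\eta$.

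I expect the main obstacle to be justifying the ``sufficiently small $\eta$'' hypothesis \emph{uniformly} across iterations, i.e.\ controlling the discarded ${\mathcal{O}}(\eta^2)$ remainder. The subtlety is that the effective step $\tilde\eta_k$ carries the factor $1/\sigma_k^2(t) = e^{-2a_kt}/\sigma^2$, which is large when $a_k$ is very negative, so one must confirm the rescaled step does not overshoot and that $\sigma_k^2(t)$ stays bounded below along the trajectory. Writing the exact recursion $a_{k+1} = a_k - \tfrac{\eta}{t}\bigl(1 - e^{(a^*-a_k)t}\bigr)$ shows the update always moves $a_k$ monotonically toward $a^*$; I would use this to argue the iterates remain in a region where the first-order expansion is valid, so the remainder is genuinely ${\mathcal{O}}(\eta^2)$ at every step and the leading-order geometric rate is the honest asymptotic statement intended by the theorem.
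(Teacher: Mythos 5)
Your proposal is correct and follows essentially the same route as the paper: the paper's proof of Theorem~\ref{thm:main} is exactly your closing remark, namely invoking Lemma~\ref{lem:loss_step} with the effective step size $\tilde\eta_k = \eta/\bigl(t^2\sigma_k^2(t)\bigr)$ so that the contraction factor $1-2\tilde\eta_k t^2\sigma_k^2(t)$ collapses to the constant $1-2\eta$. Your additional discussion of the exact recursion $a_{k+1} = a_k - \tfrac{\eta}{t}\bigl(1 - e^{(a^*-a_k)t}\bigr)$ and of controlling the ${\mathcal{O}}(\eta^2)$ remainder uniformly across iterations is more careful than the paper's one-line proof, but it elaborates the same argument rather than departing from it.
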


\begin{proof}
Directly follows from Lemma~\ref{lem:loss_step}.
\end{proof}

Whereas standard gradient descent depends on \(t\) and \(\sigma_k^2(t)\), our correction removes this dependence, guaranteeing linear convergence of the loss to zero.
{\bf Algorithm~\ref{Alg:update_role}} provides the pseudocode for the aforementioned gradient method.
One can see that \(\sigma_k^2(t)\) is obtained naturally when deriving the gradient with the adjoint method.

\section{Experiments}

To verify effectiveness, we trained the one-dimensional linear dynamics \(f_\theta(x)=ax\) toward the target \(f_*(x)=-x\) with learning rate \(\eta=0.05\). Baselines were stochastic gradient descent (SGD) and Adam \cite{kingma2014adam} with the same learning rate.

Figure~\ref{fig:result} shows the outcomes. Our method (right column) achieves linear loss decay for all initializations. In contrast, SGD and Adam (left and center) exhibit initialization-dependent behavior and may fail to converge linearly. The oscillations under Adam indicate that its adaptive step-size scheme, designed for gradients of roughly quadratic scale, is ill-suited for the exponential scaling inherent in ODE training, where terminal variance grows exponentially with gradients.

\section{Discussion}

Although we addressed the simplest dynamics, extending to nonlinear high-dimensional systems raises two issues.

\textbf{Theorem~\ref{thm:main} as the Natural Gradient Method:}

So, what was the correction term proposed in our Theorem?
This actually corresponds to the inverse of the Fisher Information Matrix:
\begin{align*}
F(a) &\triangleq {\mathrm{E}}
\left[ \left( \nabla \phi_{f_\theta}^t(x)\right) \left( \nabla \phi_{f_\theta}^t(x)\right)^{\top} \right] \\
&= {\mathrm{E}}\left[(te^{at} x)^2 \right] = t^2\sigma^2_k(t)
\end{align*}
Therefore, Theorem~\ref{thm:main} reduces to nothing more than the natural‑gradient method~\cite{amari1998natural} for the simplest ODE.
This suggests that \textbf{\textit{training a Neural ODE appears to be a problem on which the only natural‑gradient method can be exceptionally efficient}}.

On the other hand, applying the natural gradient method to the dynamics $f_\theta$ of a general neural network is difficult.
The computation of the inverse of the Fisher Information Matrix requires a computational complexity of at least $O(n^2)$, where $n$ is the number of parameters.
Consequently, efficiently training neural networks for deep learning, which involves a vast number of parameters, remains a challenge.

\textbf{Non-uniqueness of dynamics for a given endpoint map:}

Even with Lipschitz continuity, the dynamics \(f\) are not uniquely determined by the solution map \(\phi^t_f:\R^n\to\R^n\). Linear multi-dimensional systems possess rotational degrees of freedom; nonlinear systems admit continuous freedom in velocity fields, complicating optimization. Uniqueness requires additional conditions, such as temporal sampling or structural constraints.


\section{Conclusion}
Using a one-dimensional linear system, we (1) elucidated the source of training difficulty in Neural ODEs, (2) proposed a variance-corrected gradient rule with proven linear convergence, and (3) validated the theory on a toy example. 
Furthermore, we confirm that this gradient-based method is equivalent to the natural gradient method for Neural ODEs.
Although limited to a trivial setting, the insights serve as a stepping stone for future Neural ODE research.

\section*{Acknowledgements}
This work was supported by JST Moonshot R\&D Grant Number JPMJMS2021, JST PRESTO Grant Number JPMJPR22C5, and JSPS KAKENHI Grant Number 25K00148
{\tiny
\bibliographystyle{unsrt}
\bibliography{reference}
}
\end{document}